\definecolor{ashgrey}{rgb}{0.43, 0.5, 0.5}
\def\star{\ensuremath{^{*}}}
\def\BiRRT*{\textsc{bi-rrt}\star{}}
\def\RRdT*{\textsc{rr}{\smaller d}\textsc{t}\star{}}
\def\PRM*{\textsc{prm}\star{}}
\def\Cspace{\text{\emph{C-space}}\xspace}
\def\RRF*{\textsc{rrf}\textsuperscript{*}}%
\def\RRT*{\textsc{rrt}\textsuperscript{*}}%
\newcommand{\overrightsmallarrow}{\mathpalette{\overarrowsmall@\rightarrowfill@}}
\newcommand{\overarrowsmall@}[3]{%
  \vbox{%
    \ialign{%
      ##\crcr
      #1{\smaller@style{#2}}\crcr
      \noalign{\nointerlineskip}%
      $\m@th\hfil#2#3\hfil$\crcr
    }%
  }%
}
\def\smaller@style#1{%
  \ifx#1\displaystyle\scriptstyle\else
    \ifx#1\textstyle\scriptstyle\else
      \scriptscriptstyle
    \fi
  \fi
}
\newcommand*\Path[1]{\sigma_{\overrightsmallarrow{#1}}}
\newcommand*\closureset[1]{\text{cl}(#1)}
\newcommand*\bLookOut[1]{%
    \ensuremath{\beta}\textsc{\footnotesize{-LookOut}}%
    \ifthenelse{\isempty{#1}}%
    {}{(#1)}%
}
\crefname{assumption}{assumption}{assumptions}
\crefname{problem}{problem}{problems}
\crefname{algorithm}{Alg.}{Algs.}
\Crefname{algorithm}{Algorithm}{Algorithms}
\crefname{figure}{Fig.}{Figs.} %
\renewrobustcmd*{\bibinitdelim}{\,} %
\Crefname{figure}{Fig.}{Figs.}
\title{
\LARGE \bf
Rapidly-exploring Random Forest: Adaptively Exploits Local Structure with Generalised Multi-Trees Motion Planning
}
\author{Tin Lai$^{\dagger}$%
\thanks{
Correspondence to {\tt\small tin.lai@sydney.edu.au}\newline
$^{\dagger}$School of Computer Science, The University of Sydney, Australia. 
}
}
\begin{document}

\maketitle
\thispagestyle{empty}
\pagestyle{empty}

\begin{abstract}
Sampling-based motion planners perform exceptionally well in robotic applications that operate in high-dimensional space.
However, most works often constrain the planning workspace rooted at some fixed locations, do not adaptively reason on strategy in narrow passages, and ignore valuable local structure information.
In this paper, we propose Rapidly-exploring Random Forest (\RRF*)---a generalised multi-trees motion planner that combines the rapid exploring property of tree-based methods and adaptively learns to deploys a Bayesian local sampling strategy in regions that are deemed to be bottlenecks.
Local sampling exploits the local-connectivity of spaces via Markov Chain random sampling, which is updated sequentially with a Bayesian proposal distribution to learns the local structure from past observations.
The trees selection problem is formulated as a multi-armed bandit problem, which efficiently allocates resources on the most promising tree to accelerate planning runtime.
\RRF* learns the region that is difficult to perform tree extensions and adaptively deploys local sampling in those regions to maximise the benefit of exploiting local structure.
We provide rigorous proofs of completeness and optimal convergence guarantees, and we experimentally demonstrate that the effectiveness of \RRF*'s adaptive multi-trees approach allows it to performs well in a wide range of problems.
\end{abstract}

\section{Introduction}

Motion planning is one of the fundamental methods for robots to navigate and integrate with the real-world.
Obstacles and physical constraints are ubiquitous regardless of the type of robotic applications, and we wish to safely and efficiently navigate the robot from some initial state to a target state.
Sampling-based motion planners (SBPs) are of a class of robust methods to perform motion planning.
SBPs do no need to explicitly construct the often intractable high-dimensional Configuration Space (\Cspace).
SBP samples \Cspace randomly for valid connections and iteratively builds a roadmap of connectivity.
SBPs are guaranteed to find a solution if one exists~\autocite{kavraki1996_AnalProb}.
Further developments on SBPs had granted \emph{asymptotic optimality}~\autocite{elbanhawi2014_SampRobo}---a guarantee that the solution will converge, in the limit, to the optimal solution.

\begin{figure}[!tb]
    \centering
    \includegraphics[width=.9\linewidth,angle=-90]{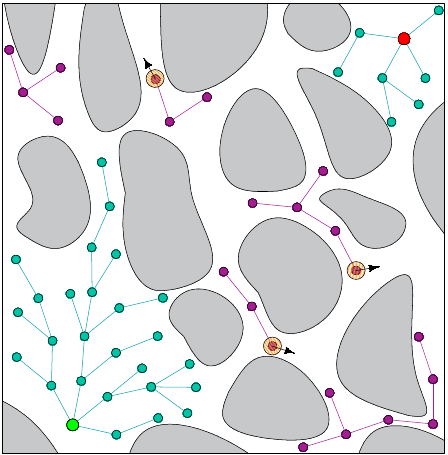}
    \caption{
        Overview illustration of \RRF* using multiple trees to adaptively plans on the current most promising tree, which rapidly explores visible space and exploits local structures with Bayesian local sampling.
        Green and red nodes refers to the \textcolor{LimeGreen}{initial $q_\text{init}$} and \textcolor{Red}{target $q_\text{target}$} configuration;
        teal and purple trees are the \textcolor{SeaGreen}{rooted $T_\text{init}, T_\text{target}$} and \textcolor{Purple}{local trees} respectively.
        Orange hatched circles are the current states of \textcolor{BurntOrange}{local samplers} sampling for their respective \textcolor{Purple}{local trees}.
        \RRF* learns to creates new local trees in highly constrained regions that are hard to extend from root trees, of which will be benefited from proposing informed samples through Bayesian local sampling.
        \label{fig:rrf-overview}
    }
\end{figure}

One of the fundamental issues with SBPs lies in SBP's approach---the exceedingly low likelihood of sampling within narrow passages.
Intuitively, regions with low visibility will have a low probability to be randomly sampled.
Therefore, when SBPs perform roadmap-based planning by random sampling and creating connections in \Cspace, highly constrained regions become problematic as they limit the connectivity of free space~\autocite{hsu1998_FindNarr,lai2018_BalaGlob}.
Narrow passages severely restrict the performance of SBPs because SBPs throw away the unlikely samples that do fall within narrow passages if the tree failed to expand.
Consequently, narrow passages will bottleneck the tree's growth until a series of tree expansions had successfully created connections within the restricting narrow passages.

Our contribution is an incremental multi-trees SBP---Rapidly-exploring Random Forest (\RRF*)---that learns from sampling information and adjust planning strategy accordingly.
We formulate \RRF* to learns regions that are likely to be bottlenecked and adaptively deploys Bayesian local sampling to exploits \Cspace's local structure.
Unlike previous local SBPs that deploy local trees everywhere regardless of the nearby region's complexity, \RRF* utilises the rapid growth of the rooted trees approach for open spaces and adaptively uses local trees within bottlenecks.
Bayesian local sampling tackles the narrow passage problem by performing sequential Markov chain Monte Carlo (MCMC) random walks within the passage, and at the same time, updates its proposal distribution from sampled outcomes.
In additions, \RRF* plans with multiple trees and allocates planning resources on the most promising tree by the reward signal from our multi-armed bandit formulation.
We provide rigorous proofs on completeness and optimal convergence guarantees.
Experimentally, we show that \RRF* achieves superior results in a wide range of scenarios, especially, \RRF* yields high sample efficiency in highly constrained \emph{C-space}.

\section{Related Work}\label{sec:related-work}

One of the most influential works on SBPs is Probabilistic Roadmap (\textsc{prm}), which creates a random roadmap of connectivity that can be reused~\autocite{kavraki1996_ProbRoad}.
Rapidly-exploring Random Tree (\textsc{rrt})~\autocite{lavalle1998_RapiRand} follows a similar idea but instead uses a tree structure to obtain a more rapid single-query solution.
Most SBPs minimises some cost, e.g., distance metric or control cost.
Therefore, \PRM* \autocite{karaman2010_IncrSamp} and \RRT* \autocite{karaman2011_SampAlgo} are introduced that exhibit \emph{asymptotic optimal} guarantee.
The runtime performance of SBPs had been one of the main focus in existing works.
For example, to address the narrow passage problem, some planners focus the random sampling to specific regions in \emph{C-space}~\autocite{yershova2005_DynaRRTs}.
Such an approach often requires some technique to discover narrow passages, e.g., using bridge test~\autocite{wilmarth1999_MAPRProb}, space projection~\autocite{orthey2019_RapiQuot}, heuristic measures of obstacles boundary~\autocite{zhang2008_EffiRetr,lee2012_SRRRSele}, and densely samples at discovered narrow regions~\autocite{hsu2003_BridTest,sun2005_NarrPass,wang2010_TripRRTs}.
There are also sampling techniques that improve sampling efficiency by using a restricted or learned sampling distribution~\autocite{gammell2018_InfoSamp,lai2020_LearPlan,ichter2018_LearSamp,sartoretti2019_PRIMPath}, which either formulate some regions for generating samples or deploy a machine learning approach to learn from experience.
However, while those approaches improved sampling efficiency, they do not directly address the limited visibility issue within narrow passages.

Some planners had employed a multi-tree approach to explore \Cspace more efficiently.
For example, growing bidirectional trees can speed up the exploration process because tree extensions at different origin are subject to a different degree of difficulty~\autocite{kuffner2000_RRTcEffi}.
Potential tree locations can be searched with the bridge test, followed by a learning technique to model the probability of tree selection~\autocite{wang2018_LearMult}.
Local trees had been employed for growing multiple trees in parallel~\autocite{strandberg2004_AugmRRTp,lai2018_BalaGlob}.
However, current approaches utilise local trees regardless of the complexity of the nearby regions.
While those approaches had improved sampling efficiency, it would be more beneficial to deploy local planners dependent on the \Cspace complexity adaptively.
Several SBPs had utilised Markov Chain Monte Carlo (MCMC) for local planning, as it allows utilisation of information observed from previous samples~\autocite{chen2015_MotiPlan}.
Monte Carlo random walk planner searches free space by constructing a Markov Chain to propose spaces with high contributions~\autocite{nakhost2012_ResoPlan}.
The roadmap of a PRM can be formulated as the result of simultaneously running a set of MCMC explorations~\autocite{al-bluwi2012_MotiPlan}.
Therefore, the connectives between feasible states can be modelled as a chain of samples walking within the free space.
Our proposing \RRF* exploits this property by utilising a Bayesian approach in proposing chained samples by sequentially updating our belief on the space that are deemed to be bottlenecks.

\section{Rapidly-exploring Random Forest}\label{sec:main}

Motion planning's objective is to construct a feasible trajectory from an initial configuration $q_\text{init}$ to a target configuration $q_\text{target}$, where $q\in C \subseteq\mathbb{R}^d$ denotes a state the \Cspace and $d\ge 2$ is the dimensionality.
The obstacle spaces $C_\text{obs}$ denotes the set of invalid states, and the set of free space $C_\text{free}$ is defined as the closure set $\closureset{C \setminus C_\text{obs}}$.
In motion planning, there is often some cost function that the planner wants to optimise%
.
\begin{problem}[Asymptotic optimal planning] \label{problem:optimality}
Given $C$, $C_\text{obs}$, a pair of initial $q_\text{init}$ and target $q_\text{target}$ configurations, a \emph{cost function} $\mathcal{L}_c\colon \sigma \to [0, \infty)$,
and let $\Gamma(C_\text{free})$ denotes the set of all possible trajectories in $\text{cl}(C_\text{free})$.
Find a solution trajectory $\sigma^*$ that exhibits the minimal cost.
That is, find $\sigma^*$ such that $\sigma^*(0) = q_\text{init}$, $\sigma^*(1) = q_\text{target}$, and $\mathcal{L}_c(\sigma^*) = \min_{ \sigma \in \Gamma(C_\text{free})} \mathcal{L}_c(\sigma)$.
\end{problem}

\subsection{High-level description}

\begin{wrapfigure}{r}{.435\linewidth} %
    \vspace{-4.8mm}
    \centering
    \includegraphics[height=\linewidth,angle=90]{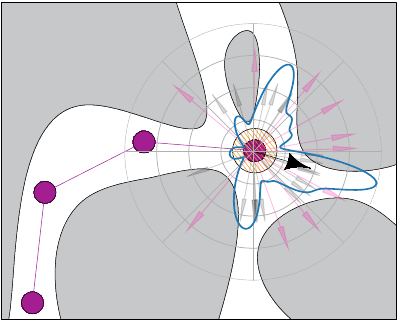}
    \caption{
        Illustration of \RRF* performing Bayesian local sampling for a local tree.
        The spherical proposal distribution is overlaid on the local sampler as a \textcolor{Blue}{blue} curve,
        which is sequentially learned after 10 previous failed samples (\textcolor{Purple}{purple} arrows).
        The transparent \textcolor{ashgrey}{grey} arrows illustrates the likelihood of sampling 20 times from the probability distribution shown.
        \label{fig:local-sampler}
    }
\end{wrapfigure}

\RRF* uses multiple trees to adaptively explores and exploits different regions of \Cspace.
Traditional approaches use only a single tree rooted at $q_\text{initial}$ to construct a roadmap of connected configurations.
The tree grows outwards by sampling random configurations that create a new connection to the closest node.
Since the tree expansion is limited to a local scope bounded by the neighbourhood visibility of the frontier tree nodes, such an approach will often reject updates from valid samples when there exists no free route from the closest existing node towards the sampled configuration.
\RRF* overcomes this limitation by addressing the sampling-based motion planning problem with a divide-and-conquer approach.
\RRF* follows the approach in~\autocite{lai2020_BayeLoca} which uses local trees for Bayesian local sampling.
However, instead of purely using Markov Chain to plan within \Cspace~\autocite{lai2020_BayeLoca}, \RRF*
adaptively locates difficult regions for tree extensions and only proposes new local trees when local planning is beneficial.
\RRF* uses two rooted trees and reformulate the spawning of local-trees as an adaptive selection process.
Previous work replaces the motion planning problem entirely with Markov Chains; however, we argue that the most effective approach relies on deploying the most appropriate strategy in specific regions.
As a result, \RRF* is biased more on exploration in regions that contain lots of free space, whereas in restricted bottlenecks, \RRF* will adaptively deploy local sampling within narrow passages.

\RRF* formulates the tree-selection problem as a multi-armed bandit (MAB) problem.
Using a MAB approach helps develop an adaptive selection strategy that actively selects the tree that is more promising in tree extensions.
MAB allows \RRF* to focus its resources on trees that are more likely to successfully expand and switch to the local sampling approach when it is more profitable.

\newcommand*\State{\Theta}
\newcommand*\state{\theta}
\newcommand*\Observable{Q}

\newcommand*\FailedSamSet{\mathcal{X}}
\newcommand*\SampleDist{\mathcal{Q}}
\newcommand*\SampleDistMul{\hat{\SampleDist{}}} %
\newcommand*\SampleDistQ{f_q}
\newcommand*{\SampleDistDelta}{\Delta_{\SampleDistMul}}

\newcommand*\vect[1]{\boldsymbol{#1}}
\newcommand*\given[1][]{\:#1\vert\:}

\subsection{Learning Local Sampling with Bayesian Updates}\label{sec:baye-local-sampling}
We model the local sampling procedure as a Markovian process.
A local planner's state $\State_t$ refers to its spatial location and $C_\text{obs}$ nearby at step $t$.
It cannot be directly observed, but we can observe the outcome by sampling a direction $x_\text{new}$ from the local planner's current location $q_t$ at $t$ and use $x_\text{new}$ to extends towards a nearby configuration $q_\text{new}$.
The observation outcome is said to be successful if $\Path{q_n q_\text{new}}\in C_\text{free}$, 
where the notation $\Path{q_i q_j}$ denote the connection between the configurations $q_i$ and $q_j$.
We use a Bayesian approach in our work to sequentially update and improve our proposal distribution based on the observed outcomes.

When a local sampler at state $\State_t$ is expanding a local tree via Bayesian local sampling, it draws a direction $x_{t,i}$ from a proposal distribution $\SampleDist_t$ and extend towards $x_{t,i}$, where $i$ denotes the $i$\textsuperscript{th} attempt to extends the connection.
If it is unsuccessful, the local planner remains at state $\State_t$, and another sample is drawn from the updated proposal distribution $x_{t,i+1} \sim \SampleDist_t(x \given x_{t-1}, \FailedSamSet_{i+1})$, where $x_{t-1}$ is the previous successful direction and $\FailedSamSet_{i+1}$ is the set of all previously failed directions at iteration $i+1$.
Whereas if the extension towards $x_{t,i}$, we say that $x_t := x_{t,i}$ (the successful direction at $t$), and local planner will transits to state $\State_{t+1}$ and proceed to draws sample from $x_{t+1,1} \sim \SampleDist_{t+1}(x \given x_{t}, \FailedSamSet_{1})$ in the next iteration, where $\FailedSamSet_{1} := \emptyset$.
Therefore, the update follows the Bayesian updating scheme where $f_\text{posterior} \propto f_\text{prior} \cdot f_\text{likelihood}$ and subsequently uses current posterior as the next prior.

\RRF* uses the von-Mises Fisher distribution~\autocite{fisher1995_StatAnal} $f(\boldsymbol{x} \given \boldsymbol{\mu}, \kappa)$ as the initial prior $f_\text{prior}$ with $\boldsymbol{\mu}$ being the previous successful direction.
After observing the sampling outcome, we use a likelihood function to reduce the probability to sample again in the previously failed directions. 
The likelihood function is formulated as $f_\text{likelihood}(x \given x') \propto \big(1 - k(x, x')\big)$, where $k$ is the kernel function to incorproate failure information from $\mathcal{X}$.
We can write the posterior of the proposal distribution for local sampler at state $\State_t$ as
\begin{align}
    \SampleDist_t(x \given x_{t-1},\FailedSamSet_i)
     & = \frac{ \SampleDist_t(x \given x_{t-1},\FailedSamSet_{i-1})
    \left(1 - \beta \cdot k(x,x'_{i-1}) \right)}{\alpha_{i}},
    \label{eq:recursive-def-sample-dist}
\end{align}
where $x'_{i-1} \in \FailedSamSet_{i} = \set{x'_{1}, \ldots x'_{i-1}}$ for $i > 1$, $\beta$ is a scalar that controls the influence of the kernel,
and $\alpha_{i}$ is the normalising factor.
Note that $\SampleDist_{i}(x \given x_{i-1}, \FailedSamSet_{1})$ reduces to $f_\text{prior}(x \given x_{i-1})$ as $\FailedSamSet_{1} := \emptyset$.
We use the periodic squared exponential~\autocite{mackay1998_IntrGaus} to sequentially incorporate past sampled results into $\SampleDist$, given by
$
    k(x,x') = \sigma^2 \exp({-{2\sin^2(\pi(x-x')/p)} / {\lambda^2}}),
$
where $\lambda$ is the length scale, $\sigma$ is a scaling factor, and $p=2\pi$ is the period of repetition.
\Cref{fig:local-sampler} illustrates an example of the sequentially updated distribution $\SampleDist$ after 10 failed samples.

\subsection{Maximising Successes with Multi-Armed Bandit}\label{sec:mab}

The tree selections process is formulated as a MAB problem to maximise resource allocation.
Each tree $T_i \in \mathcal{T}$ is located at some configuration in $C_\text{free}$, which exhibits varying degree of complexity dependent on its surroundings.
Hence, there are often some trees in $\mathcal{T}$ that exhibit a higher success tree expansion rates than others.
Using MAB scheduler helps to bias the exploration of the trees rooted at $q_\text{init}$ and $q_\text{target}$ when there are abundant free spaces (usually results in a faster initial solution), or \RRF* can create local trees for local-connectivity exploitation when \Cspace is highly restricted (helps to plan within narrow passages).
Because spaces with complex surroundings (narrow passages) can utilise local-connectivity information to exploits obstacles' structure (as discussed in \cref{sec:baye-local-sampling});
whilst areas with high visibility allow us to allocate more resources on rapid exploring.

We formalise the tree selections in \RRF* as a Mortal MAB \autocite{chakrabarti2009_MortMult} with non-stationary reward sequences \autocite{besbes2014_StocMult}.
Each arm $a_i \in A$ is a stateful local sampler that transverse within $C_\text{free}$, with state at time $t$ denoted as $a_{i,t}$ that consists of the spatial location, previous successful direction and the number of failed expansions.
At each $t$ it samples $C$ to makes an observation $o_t$ and receives a reward $R(a_{i,t})$.
We consider discrete time and Bernoulli arms, with arms success probability at time $t$ formulated as
\begin{equation}
    \forall {i\in\{1,...,k\}}\colon \mathbb{P}(a_{i,t} \mid a_{i,t-1}, o_{t-1}). \label{eq:arm-probability}
\end{equation}
We consider the arms as independent random variables with $\mathbb{P}(a_{i,t})$ having non-stationary distribution dependent on its previous samples.
The reward $R(a_{i,t})$ is a functional mappings of the current state of the local sampler to a real value reward, where
\begin{equation}
    R(a_{i,t})=
    \begin{cases}
        1 & \text{if $a_{i,t}$ located in $C_\text{free}$} \\
        0 & \text{otherwise}.
     \end{cases}
\end{equation}
The reward sequence $R(a_{i,1}),\ldots,R(a_{i,t})$ is a stochastic sequence with unknown payoff distributions and can changes rapidly according to the complexity of \Cspace.
Each arm has a stochastic lifetime after which they would expire due to events such as running into a dead-end or stuck in a narrow passage.
In practice we prune away an arm $a_i$ if it is deemed unprofitable when its success probability $\mathbb{P}(a_{i,t} \mid a_{i,t-1}, o_{t-1})$ is lower than some threshold $\eta$.
The action space to create new local trees is infinite, but we identify high potential areas to spawn new arms through learning areas in $C_\text{free}$ that exhibit frequent failed tree-extension attempts.

\subsection{Implementation}\label{sec:alg:implementation}

\begin{algorithm}[tb]
    \caption{\RRF* Algorithm} \label{alg:rrf}
\Indmm\Indmm
    \KwIn{$q_\text{init},q_\text{target},N,k,\epsilon,\eta$}
     \KwInit{
        $\mathcal{T} \gets \{\;
            T_\text{init}(V=\{q_\text{init}\}, E=\emptyset),\,
            T_\text{target}(V=\{q_\text{target}\}, E=\emptyset) 
        \;\}$; $A\gets$ Initialise arms;
        $n \gets 1$;
        }
\Indpp\Indpp
    \While{$n \le N$}{
    
        $\mathcal{T},A \gets \texttt{ProposeLocalTree}(\mathcal{T},A,k,\epsilon)$\;
        $T_i, a_i \gets \texttt{PickTreeMAB}(\mathcal{T}, A)$\;  \label{alg:rrf:picktree}

        \uIf{$T_i \in \set{T_\text{\normalfont init}, T_\text{\normalfont target}}$}{ \label{alg:rrf:T-is-rooted}
            $q_\text{rand} \gets$ \texttt{SampleFree}\; \label{alg:rrf:sampling-uniform}
            $q_\text{closest} \gets$ closest node in $T_i$\;
            $q_\text{new} \gets$ $q_\text{closest}$ with $\epsilon$ step towards $q_\text{rand}$\; \label{alg:rrf:T-is-rooted:end}
        }
        \Else{
            $q_\text{closest} \gets$ current location of local sampler in $T_i$\;
            $q_\text{new} \gets$ samples for $T_i$ locally via MCMC\; \label{alg:rrf:local-sampling}
        }
        \If{$\Path{q_\text{\normalfont closest}\;q_\text{\normalfont new}} \in C_\text{\normalfont free}$}{  \label{alg:rrf:check-free}
            Join $q_\text{new}$ to all $T_i \in \mathcal{T}$ within $\epsilon$-distance\; \label{alg:rrf:join-nearby-tree}
            \lIf{$q_\text{new}$ joined to $T_\text{init}$}{\FnRewire{$T_\text{init}$}} \label{alg:rrf:rewire}
            Update location of $a_i$ to $q_\text{new}$\; \label{alg:rrf:update-arm-pos}
            $n \gets n + 1$\;
        }
        Updates $T_i,a_i$ probability\; \label{alg:rrf:update-prob}
        \Comment*[l]{Remove local tree with low prob}
        \If{$T_i \not\in \{T_\text{init}, T_\text{target}\}$ and $\mathbb{P}(a_i) < \eta$}{
            $A \gets A \setminus \{a_i\}$\;
        }
        
    }
\end{algorithm}

\Cref{alg:rrf} illustrates the overall flow of the \RRF* algorithm.
The planner begins by initialising two trees rooted at $q_\text{init}$ and $q_\text{target}$, similar to Bidirectional RRT~\autocite{kuffner2000_RRTcEffi}.
Then, arms are initialised into $A$, which stores the trees' success probability,
and subsequently used in the \texttt{PickTreeMAB} multi-armed bandit subroutine in \cref{alg:rrf:picktree}.
\RRF* explores \Cspace by picking $a_i \in A$ and its corresponding $T_i \in \mathcal{T}$ based on how likely it is for the $T_i$ tree extension to be successful.
Am arm $a_i$ is drawn from a multinomial distribution with each $a_i \in A$ having probability formulated by \cref{eq:arm-probability}.
Then, if $T_i$ is a tree rooted at either $q_\text{init}$ or $q_\text{target}$ (\cref{alg:rrf:T-is-rooted}), the standard tree-based planning procedure is followed---a random free configuration $q_\text{rand}$ is uniformly and randomly sampled with \FnSampleFree, followed by searching for the closest node in $T_i$ and attempting to extend the node towards $q_\text{rand}$ by an $\epsilon$ distance.
On the other hand, if $T_i$ is a local tree
we utilise local sampling to exploit local structure (\cref{alg:rrf:local-sampling}).
Local sampling incorporates past successes by utilising a Bayesian sequential updates procedure (\cref{sec:baye-local-sampling}), which uses a MCMC random walker that constructs a chained sampling with the sequentially learned proposal distribution $\SampleDist$ in~\cref{eq:recursive-def-sample-dist} to propose $q_\text{new}$.
In both cases, if the connection from $q_\text{closest}$ to $q_\text{new}$ is valid then the connection will be added to $T_i$ accordingly.
Whenever we add a new node to $T_i$, we search its nearby $\epsilon$ radius for nodes from other trees and merges both trees (\cref{alg:rrf:join-nearby-tree}).
This procedure naturally combines trees as part of the tree expansions process, which gradually reduces the number of local trees as the roadmap fills $C_\text{free}$.
If a new node or tree is joined to the root tree $T_\text{init}$, the standard rewire procedure is performed to guarantees asymptotic optimality (see~\autocite{karaman2010_IncrSamp} for details on \texttt{Rewire}).
Finally, \RRF* updates the arm $a_i$'s success probability and will discard the local sampler if its probability is lower than some threshold $\eta$, e.g., when the local sampler is stuck in some dead-end.
The tree $T_i$ will remain to be in $\mathcal{T}$ for future connections, but the arm $a_i$ will be discarded such that there will be no local sampler actively performing local sampling for $T_i$.

\Cref{alg:new-local-tree} shows the subroutine to propose new local tree locations.
Local trees are only created when a constrained location with high potential are proposed in~\cref{alg:new-local-tree:propose-new-loc}, where $p_\text{potential}$ denotes the probability that the proposing location will be benefited from local sampling.
Numerous different techniques can be used for this proposal procedure.
For example, (i) location's proposal can be supplied by the user, (ii) by laying prior and using Bayesian optimisation to sequentially update the posterior, or (iii) directly learned by neural networks through past experiences.
In our experiments, we implemented a simple cluster-based technique that is quick to compute.
The location proposer identifies regions that have a considerable amount of samples $q_\text{rand} \in C_\text{free}$ that repeatedly failed to connect to existing trees 
and propose those regions based on the failure density.
Our approach utilises sample information that is readily available without extra effort, which results in a non-trivial performance gain.
The location proposer does not need to be perfect, but proposing regions near narrow passages will significantly enhance the effectiveness of exploiting local structures.

\section{Analysis}\label{sec:analysis}

In this section, we analyse the behaviour of the \RRF* algorithm and its algorithmic tractability.
In additions, we provide proof of completeness and 
optimality guarantee.

\subsubsection{Tractability of Planning with Local Trees}

Although \RRF* creates new local trees to explore narrow passages, tractability is retained as the number of local trees is bounded.

\begin{assumption}\label{assum:c-free-finite-set}
The free space $C_\text{free} \subseteq C$ is a finite set.
\end{assumption}

\begin{theorem}[Termination of creating new local trees] \label{thm:termination-new-local-trees}
Let \Cref{assum:c-free-finite-set} hold.
Then, the accumulative number of local trees $n_\text{local-tree}$ for any given \Cspace is always finite.
That is, there exists a constant $\varphi\in\mathbb{Z}, \varphi \ge 0$ such that $n_\text{local-tree} \le \varphi$ for any given $C$.
\end{theorem}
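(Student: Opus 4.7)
The plan is to combine the finiteness of $C_\text{free}$ asserted by \cref{assum:c-free-finite-set} with the structural requirement that each newly spawned local tree must be rooted at a spatially distinct region. If every pair of local-tree roots is separated by at least some algorithmic minimum distance $\rho > 0$, then a standard packing argument in $\mathbb{R}^d$ immediately bounds their total count by a constant $\varphi$ depending only on $C_\text{free}$, $\rho$, and $d$.

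First, I would formalise the proposal subroutine of \RRF* as described in \cref{sec:alg:implementation}: a new local tree is instantiated only when the cluster-based proposer detects a sufficiently dense cluster of samples $q_\text{rand}\in C_\text{free}$ that repeatedly failed to connect to any existing tree. Let $\rho > 0$ denote the characteristic clustering radius (a fixed parameter of the proposer, at least of order $\epsilon$, the extension step size of \RRF*). Any candidate root output by the proposer must therefore lie at the centre of a ball of radius $\rho$ containing a minimum mass of failed-connection samples.

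Second, I would argue that the set of root configurations $\{q_1,\ldots,q_{n_\text{local-tree}}\}$ of all local trees produced throughout an execution of \RRF* is $\rho$-separated. The key mechanism is the merge step in \cref{alg:rrf:join-nearby-tree}, which absorbs every new node falling within $\epsilon$ of an existing tree into that tree; hence once a local tree is established at $q_j$, subsequent $q_\text{rand}$ samples in its neighbourhood either succeed and are absorbed, or are pruned together with the tree when $\mathbb{P}(a_j)<\eta$. In either case, a persistent failure cluster cannot form inside the $\rho$-neighbourhood of $q_j$, and so the proposer never instantiates a second root within distance $\rho$ of any existing one.

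Third, invoking \cref{assum:c-free-finite-set}, $C_\text{free}\subseteq\mathbb{R}^d$ has finite extent, so it admits only finitely many $\rho$-separated points, say at most $\varphi = \varphi(C_\text{free},\rho,d)$ by a volumetric packing bound. This yields $n_\text{local-tree}\le \varphi$, completing the claim. The hard part will be tightening the second step: the informal cluster-based description in \cref{sec:alg:implementation} does not by itself forbid re-proposing near an existing tree, so the argument needs either an explicit deduplication step in the proposer or a careful accounting of how the merge operation dissolves failure clusters. Either route reduces the statement to the finite packing bound above.
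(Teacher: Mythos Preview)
Your high-level strategy---a packing/separation argument in $C_\text{free}$---is exactly what the paper does. However, you missed the mechanism that makes the separation step immediate, and instead built an indirect argument with an acknowledged gap.

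The paper's proof does not go through the cluster-based proposer or the merge step of \cref{alg:rrf} at \cref{alg:rrf:join-nearby-tree}. It points directly to \cref{alg:new-local-tree}, lines \ref{alg:new-local-tree:epsilon-joins-1}--\ref{alg:new-local-tree:epsilon-joins-2}: whenever a proposed root $q_\text{local}$ lies within $\epsilon$ of any existing tree, it is absorbed and \emph{no new tree is created}. This is precisely the ``explicit deduplication step in the proposer'' you speculated you would need; it is already in the algorithm. From this, every new local tree root is at least $\epsilon$ away from every node of every existing tree, so the separation radius is simply $\epsilon$ (not an auxiliary clustering radius $\rho$), and the packing bound follows at once from \cref{assum:c-free-finite-set}.

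Your route via failure-cluster dissolution and the Algorithm~\ref{alg:rrf} merge step is unnecessary and, as you yourself note, does not actually close: nothing in the cluster description prevents the proposer from emitting a second root near an existing one, and arm pruning when $\mathbb{P}(a_j)<\eta$ can even reopen a region to new failure clusters. Drop that line of reasoning and invoke the $\epsilon$-check in \cref{alg:new-local-tree} directly.
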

\begin{proof}
From \cref{alg:new-local-tree}, if there exists other $T_i \in \mathcal{T}$ within $\epsilon$ distance (\cref{alg:new-local-tree:epsilon-joins-1}), the proposing configuration will be joined to the existing local tree $T_i$ and no new tree will be created (\cref{alg:new-local-tree:epsilon-joins-2}).
Hence, every new local tree $T_j$ must be at least $\epsilon$-distance away from $T_i \in \mathcal{T}$ where $T_i \neq T_j$.
We can formulate each node in $T_i \in \mathcal{T}$ as a $d$-dimensional volume hypersphere with $\epsilon$ radius centred at the node.
In the limiting case, $C_\text{free}$ will be completely filled by the volume of $\epsilon$-balls.
Let $\epsilon(q)$ denotes the $\epsilon$-ball of $q$, and $V_t$ be the set of all nodes from all trees $T\in\mathcal{T}$ at iteration $t$.
Then, the above statement is formally stated as
    $\lim_{t\to\infty} C_\text{free} \setminus \bigcup_{q \in V_t} \epsilon(q) = \emptyset.$
Therefore, $n_\text{local-tree}$ is upper bounded by the number of $\epsilon$-balls that fully fills $C_\text{free}$.
It is immediate that if $C_\text{free}$ is a finite set, $n_\text{local-tree}$ will always be bounded by a constant.
\end{proof}

\begin{algorithm}[tb]
    \caption{Propose new local tree} \label{alg:new-local-tree}
    \Fn{$\FuncSty{ProposeLocalTree}(\mathcal{T},A,k,\epsilon)$}{
        \If{$|A| < k$}{
            $q_\text{local},p_\text{potential} \gets$ Propose new local tree location \; \label{alg:new-local-tree:propose-new-loc}
            \If{$p_\text{\normalfont potential} > \delta_\text{\normalfont threshold}$}{ \label{alg:new-local-tree:prob-threshold}
                Join $q_\text{local}$ to all $T \in \mathcal{T}$ within $\epsilon$-distance\; \label{alg:new-local-tree:epsilon-joins-1}
                \If{Not joined to existing tree}{ \label{alg:new-local-tree:epsilon-joins-2}
                    $T\gets G(V=\{q_\text{local}\}, E=\emptyset)$ \;
                    $\mathcal{T}\gets \mathcal{T} \cup \{T\}$ \;
                    $A \gets$ Add new local sampler $a$ to $A$ \;
                }
            }
        }
        \Return{$\mathcal{T}, A$} \Comment*[r]{new local tree is added}
    }
\end{algorithm}

\subsubsection{Joining of Local Trees}

Local trees that are connectable are guaranteed to be eventually connected.
\begin{theorem}[see \autocite{wang2018_LearMult}] \label{thm:ktree-join-bound}
Let $C_\text{free}' \subseteq C_\text{free}$ be a connected free space.
Select $k$ configurations $q_1,...,q_k$ randomly from $C_\text{free}'$ including $q_\text{init}$ and $q_\text{target}$. Set $q_1,...,q_k$ as root nodes and extend $k$ trees from these points.
Let $n$ be the total number of nodes that all these trees extended, and $\gamma\in\mathbb{R}$ be a real number in $(0,1]$.
If $n$ satisfies
\begin{equation}
\begin{split}
    n \ge\, & k(\alpha\beta\epsilon)^{-1} \ln[4(1-\epsilon)]\ln\{3\ln[2k^2(1-\epsilon)]/\gamma\beta\} \\
    & + k\mu(C_\text{free})/\mu(C)\ln(3k^2/2\gamma)
\end{split}
\end{equation}
then the probability that each pair of these $k$ trees can be attached successfully is at least $1-\gamma$.
\end{theorem}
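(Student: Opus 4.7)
The plan is to follow the classical Hsu--Kavraki style expansiveness argument for sampling-based planners, extended from the single-tree setting to the $k$-tree setting, and then close with a union bound. Since the statement is borrowed verbatim from \autocite{wang2018_LearMult}, my proof sketch mirrors theirs: first establish a per-pair attachment bound that depends on the $(\alpha,\beta,\epsilon)$-expansiveness of the connected component $C_\text{free}'$, then inflate $\gamma$ by the number of pairs, and finally invert the resulting inequality to obtain a condition on $n$.

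First I would recall that $(\alpha,\beta,\epsilon)$-expansiveness of $C_\text{free}'$ implies that for any reachable subset $S$, there is a lookout set of volume at least $\beta\,\mu(S)$ from which the set of configurations reachable within an $\epsilon$-step grows by a multiplicative factor at least $\alpha$. For a single pair of trees $T_i,T_j$ rooted at $q_i,q_j\in C_\text{free}'$, I would argue in two stages. Stage one is a \emph{coverage} argument: with $n/k$ uniform samples per tree, standard volumetric bounds show that $T_i$'s reachable set fills $C_\text{free}'$ except for a measure-$\gamma_1$ residual, once $n/k\ge (\mu(C_\text{free})/\mu(C))\ln(1/\gamma_1)$ (this produces the second additive term). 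Stage two is a \emph{penetration} argument: once coverage is achieved, expansiveness lets each tree cross the chain of lookout sets separating it from the other tree, and the probability of failing to sample in some chain element is bounded geometrically, producing the factor $(\alpha\beta\epsilon)^{-1}\ln[4(1-\epsilon)]\ln\{\cdot/\beta\}$ that appears in the first term.

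Next I would aggregate across pairs. There are $\binom{k}{2}\le k^2/2$ pairs, so to make every pair attach with total probability at least $1-\gamma$ it suffices to drive each pair's failure probability below $2\gamma/k^2$. Substituting $\gamma_1=2\gamma/(3k^2)$ (roughly) into the coverage bound gives the $k\,\mu(C_\text{free})/\mu(C)\,\ln(3k^2/2\gamma)$ term, and substituting the analogous failure budget into the penetration bound gives the $k(\alpha\beta\epsilon)^{-1}\ln[4(1-\epsilon)]\ln\{3\ln[2k^2(1-\epsilon)]/\gamma\beta\}$ term. Summing the two per-tree requirements and multiplying by $k$ (since $n$ counts nodes across all trees) yields exactly the stated bound.

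The main obstacle, and the step where I would lean most heavily on \autocite{wang2018_LearMult} and the underlying \autocite{hsu1998_FindNarr,kavraki1996_AnalProb} analyses, is the second stage: controlling the chained probability that every lookout set along the minimal reachability path between $T_i$ and $T_j$ is sampled. The doubly-logarithmic structure $\ln\{\ln[\cdot]/\cdot\}$ is precisely the fingerprint of inverting a coupon-collector-like bound obtained by iterated application of expansiveness, and getting the constants inside the logarithms to line up with the statement is the finicky part; everything else is bookkeeping with the union bound.
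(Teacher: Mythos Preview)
The paper does not actually prove this theorem: its entire proof reads ``Due to space constrains, see~\autocite{wang2018_LearMult} for full proof.'' Your sketch therefore already goes well beyond the paper, and since you explicitly mirror the expansiveness-based argument of \autocite{wang2018_LearMult} (via \autocite{hsu1998_FindNarr,kavraki1996_AnalProb}) and close with a union bound over the $\binom{k}{2}$ pairs, you are aligned with the cited source that the paper defers to. There is nothing in the paper itself to compare against at a finer level.
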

\begin{proof}
Due to space constrains, see~\autocite{wang2018_LearMult} for full proof.
\end{proof}
\begin{theorem}[Joining of local trees]\label{thm:jointrees}
Let $T_i,T_j \in \mathcal{T}$ be instances of local trees in $C_\text{free}'$ where $T_i \ne T_j$.
If there exists a feasible trajectory to connect $T_i$ to $T_j$, it is guaranteed that both trees will joined as a single tree as $t \to \infty$.
\end{theorem}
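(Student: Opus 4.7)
The plan is to reduce the theorem to \Cref{thm:ktree-join-bound}. Because a feasible trajectory exists between $T_i$ and $T_j$, both trees live in the same connected component $C_\text{free}'$. \Cref{thm:ktree-join-bound} then states that, for any target failure probability $\gamma > 0$, once the total number of nodes added inside $C_\text{free}'$ surpasses an explicit threshold $n(\gamma)$, every pair of trees in $C_\text{free}'$---including $T_i$ and $T_j$---is attached with probability at least $1-\gamma$. The whole argument therefore hinges on showing that this sample count in $C_\text{free}'$ grows without bound as $t \to \infty$, at which point one picks $\gamma_m \to 0$ and applies \Cref{thm:ktree-join-bound} for each $\gamma_m$ to obtain a finite $t_m$ with $\Pr[T_i, T_j \text{ joined by } t_m] \ge 1 - \gamma_m$; letting $m \to \infty$ then delivers the almost-sure conclusion.

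To establish the unbounded sample count, I would split on the status of the MAB arms of $T_i$ and $T_j$. While an arm is active, the multinomial rule~\cref{eq:arm-probability} selects it with strictly positive probability at every iteration, and the Bayesian local sampler of \cref{sec:baye-local-sampling} proposes candidates from a von-Mises Fisher-based distribution with full angular support; by a Borel--Cantelli-style argument, infinitely many extension attempts are then made within $C_\text{free}'$ almost surely. Should an arm be discarded because its success probability falls below $\eta$, I would fall back on the always-present root arms $T_\text{init}$ and $T_\text{target}$: these cannot be pruned, and the uniform sampler \FnSampleFree in \cref{alg:rrf:sampling-uniform} deposits samples throughout $C_\text{free}$, so a standard \RRT* density argument ensures that the reachable component of each root tree is asymptotically $\epsilon$-covered. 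Whenever that root tree shares a component with $T_i$ or $T_j$, the $\epsilon$-merge step in \cref{alg:rrf:join-nearby-tree} eventually glues them into a single tree; since $T_i$ and $T_j$ lie in the same component $C_\text{free}'$, joining each to a common intermediary suffices to join them to each other.

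The hardest step will be the pruned-arm case in which $C_\text{free}'$ contains neither $q_\text{init}$ nor $q_\text{target}$: no root tree can then deposit nodes into $C_\text{free}'$, and if both local arms are discarded the sampling inside $C_\text{free}'$ would appear to halt. Closing this gap rigorously requires exploiting \texttt{ProposeLocalTree}, which creates local trees only at locations where failed root-tree extensions have been observed---implicitly certifying that some active tree has $\epsilon$-reachability into $C_\text{free}'$---together with the finite bound from \cref{thm:termination-new-local-trees} on the cumulative number of local trees ever planted. Turning this implicit reachability into an explicit guarantee of infinitely-often sampling in $C_\text{free}'$, and ruling out the pathological scenario in which all local arms die before a merge occurs, is in my view the subtle point around which a careful write-up must be organised.
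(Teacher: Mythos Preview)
Your core approach matches the paper's: combine the finiteness of the tree count from \Cref{thm:termination-new-local-trees} with the node-count bound of \Cref{thm:ktree-join-bound}, then let the number of nodes run to infinity so that the bound is satisfied for every $\gamma>0$. The paper's proof is exactly this, compressed to two sentences: it invokes \Cref{thm:termination-new-local-trees} to fix $k$, then asserts that ``in the limiting case, all local trees satisfy the real-valued bound given by \Cref{thm:ktree-join-bound}.''

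Where you differ is in rigour rather than strategy. The paper does not argue that the node count inside $C_\text{free}'$ tends to infinity, does not treat the active-versus-pruned arm split, and does not consider the isolated-component scenario you flag as the hardest case. Your proposal is therefore a strictly more careful version of the same argument; the edge case you worry about---all local arms in a component disconnected from both roots being pruned before a merge---is a genuine subtlety that the paper simply glosses over, so do not expect to find a resolution of it there.
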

\begin{proof}
\Cref{thm:termination-new-local-trees} states that the number of local trees is finite.
Hence, in the limiting case, all local trees satisfy the real-valued bound given by \Cref{thm:ktree-join-bound}.
\end{proof}

\begin{figure}[!bth]
    \centering
    \begin{subfigure}{0.35\linewidth}
        \includegraphics[width=.99\linewidth,frame]{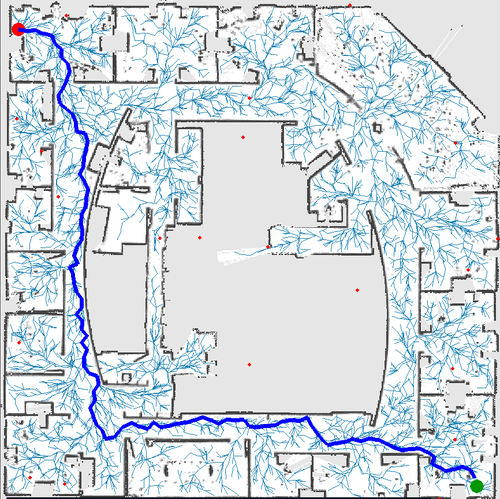}
        \caption{Intel lab \label{fig:map:intel}}
    \end{subfigure}
    \begin{subfigure}{0.35\linewidth}
        \includegraphics[width=.99\linewidth,frame]{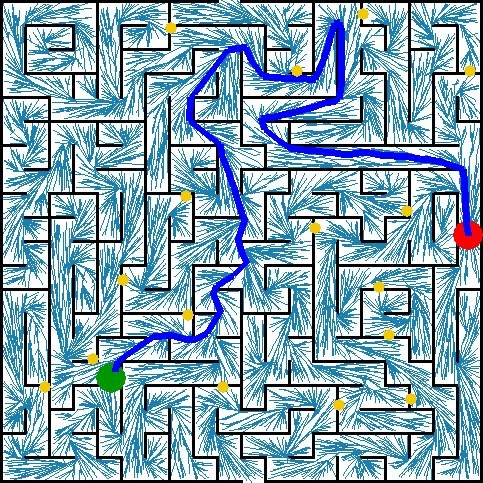}
        \caption{Maze \label{fig:map:maze}}
    \end{subfigure}
    
    \begin{subfigure}{0.35\linewidth}
        \includegraphics[width=.99\linewidth,frame]{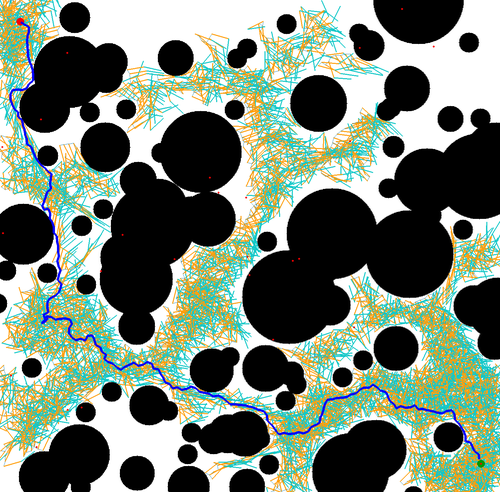}
        \caption{4D rover arm \label{fig:map:4d}}
    \end{subfigure}
    \begin{subfigure}{0.35\linewidth}
        \includegraphics[width=.99\linewidth,frame,trim={6cm 0 0 0},clip]{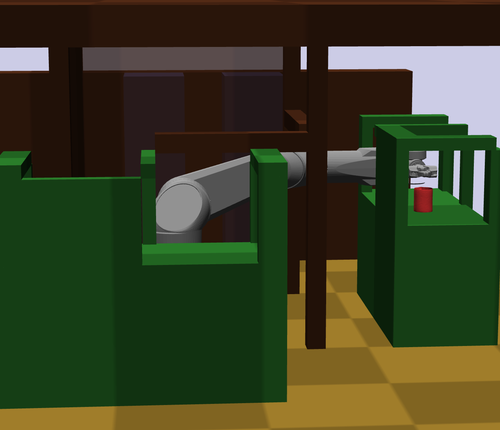}
        \caption{6D manipulator \label{fig:map:6d}}
    \end{subfigure}
    \caption{Environments.
    (\subref{fig:map:intel}) Intel lab map data and with a point mass robot as baseline.
    (\subref{fig:map:maze}) Cluttered maze environment with extremely limited visibility.    
    (\subref{fig:map:4d}) Rover arm with two spatial dimensionality and two rotational joints.
    (\subref{fig:map:6d}) Manipulator with six rotational joints for an object pickup task.
    \label{fig:map}
    }
\end{figure}

\subsubsection{Probabilistic Completeness}
\RRF* attains probabilistic completeness as its number of uniform random samples approaches infinity.

\begin{theorem}[Infinite random sampling] \label{lemma:inf-uniform-sampling}
Let $n_{i,t}$ denotes the number of uniformly random configurations used for creating new node in \RRF* at time $t$.
The number $n_{i,t}$ always increases without bound, i.e., as $t\to\infty$, $n_{i,t} \to \infty$.
\end{theorem}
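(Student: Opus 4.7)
The plan is to show that the two rooted trees $T_\text{init}$ and $T_\text{target}$ are selected by the \texttt{PickTreeMAB} subroutine infinitely often, and that each such selection contributes, with probability bounded below, a fresh uniform sample that creates a new node. First I would observe that, by construction of \cref{alg:rrf}, the pruning step only removes arms associated with \emph{local} trees, so the arms $a_\text{init}, a_\text{target}$ corresponding to the rooted trees remain in $A$ throughout the execution. Combined with \Cref{thm:termination-new-local-trees}, the set $A$ at any iteration has cardinality at most $\varphi + 2$.

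Next I would bound the per-iteration selection probability of a rooted arm away from zero. Since arm success probabilities are clipped from below by the pruning threshold $\eta$ for surviving local arms, and the rooted arms maintain their own strictly positive success probability $p_\text{root}$ (the probability that a uniform sample from $C_\text{free}$ extends the tree collision-free by $\epsilon$, which is positive because $q_\text{init}$ and $q_\text{target}$ lie in open neighbourhoods of $C_\text{free}$), the multinomial draw in \cref{alg:rrf:picktree} satisfies
\begin{equation}
    \mathbb{P}\bigl(T_i \in \{T_\text{init}, T_\text{target}\}\bigr) \;\ge\; \frac{p_\text{root}}{p_\text{root} + (\varphi+1)} \;=: p_\text{min} \;>\; 0 \,.
\end{equation}
Applying the conditional second Borel--Cantelli lemma to the sequence of selection events (which have probability at least $p_\text{min}$ at every step, regardless of the history), we conclude that a rooted tree is picked infinitely often almost surely.

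The final step is to convert infinitely many rooted-tree selections into infinitely many \emph{successful} uniform extensions. Each rooted-tree iteration executes \cref{alg:rrf:sampling-uniform,alg:rrf:T-is-rooted:end}, drawing $q_\text{rand}$ uniformly from $C_\text{free}$ via \FnSampleFree; the probability that the resulting $\epsilon$-extension satisfies $\Path{q_\text{closest}\,q_\text{new}} \in C_\text{free}$ is bounded below by a positive constant that depends only on $C_\text{free}$ and $\epsilon$ (this is the standard RRT extension-probability argument, using that every tree contains at least one node with an $\epsilon$-ball in $C_\text{free}$). A second Borel--Cantelli application then yields infinitely many successful extensions from uniform samples, so the counter $n_{i,t}$ diverges as $t \to \infty$.

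The main obstacle is controlling the MAB so that rooted arms are not starved by high-reward local arms. The argument above relies on the fact that (i) the set of competing arms is bounded by $\varphi + 2$ thanks to \Cref{thm:termination-new-local-trees}, and (ii) the rooted arms' success probability does not collapse to zero, which follows from the persistent validity of uniform sampling in $C_\text{free}$. If the MAB update in \cref{alg:rrf:update-prob} were allowed to drive rooted-arm probabilities to zero, the argument would fail; I would therefore highlight that the clipping behaviour implicit in \cref{eq:arm-probability} and the Bernoulli reward structure are the crucial ingredients keeping $p_\text{min}$ bounded away from zero.
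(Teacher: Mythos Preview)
Your argument is sound but follows a genuinely different route from the paper's. The paper's proof is purely structural: it invokes \Cref{thm:termination-new-local-trees} to conclude that local-tree creation eventually halts, then (implicitly leaning on \Cref{thm:jointrees}) asserts that all local trees are absorbed into $T_\text{init}$, after which the test in \cref{alg:rrf:T-is-rooted} is always satisfied and every iteration calls \texttt{SampleFree}. In other words, the paper argues that competing local arms \emph{vanish} in the limit, so the rooted arms win by default irrespective of how the MAB weights evolve. You instead keep the local arms in the picture throughout and argue quantitatively: rooted arms are never pruned, the arm set has cardinality at most $\varphi+2$, so the multinomial selection probability of a rooted arm is bounded below by some $p_\text{min}>0$, and two Borel--Cantelli applications finish the job.

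Your approach is more careful and does not rely on the somewhat informal claim that, once merged, no further local sampling occurs. The cost is that you must confront the MAB update in \cref{alg:rrf:update-prob} directly, and you rightly flag that the argument collapses if the rooted arms' success probabilities are allowed to decay to zero; the paper sidesteps this entirely because in its argument there are eventually no competitors. One minor point: your displayed lower bound $p_\text{root}/(p_\text{root}+(\varphi+1))$ is not quite the right normalisation for a multinomial over at most $\varphi+2$ arms with weights in $(0,1]$; the cleaner bound is $p_\text{root}/(\varphi+2)$, but this does not affect the conclusion.
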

\begin{proof}
As stated in \cref{thm:termination-new-local-trees}, as $t\to\infty$ there will be no new local trees being created.
Therefore, the behaviour of \RRF* sampling will be dominated by \cref{alg:rrf} \cref{alg:rrf:T-is-rooted} to \cref{alg:rrf:T-is-rooted:end} when all local trees in $\mathcal{T}$ are joined to $T_\text{init}$.
Therefore, \RRF* will be reduced to use \texttt{SampleFree} to samples uniformly in $C_\text{free}$.
\end{proof}

\begin{theorem}\label{thm:asym-same-as-rrt}
Let $n^{\RRF*}_t$ and $n^{\RRT*}_t$ be the number of uniformly random configurations sampled at time $t$ for \RRF* and \RRT* respectively.
There exists a constant $\phi \in \mathbb{R}$ such that
$
    \lim_{t\to\infty} \mathbb{E}[ n^{\RRF*}_t / n^{\RRT*}_t ] \le \phi.
$

\end{theorem}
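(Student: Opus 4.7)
The plan is to observe that \RRT* draws exactly one uniform configuration per iteration (its \texttt{SampleFree} call is unconditional), so $n^{\RRT*}_t = t$. For \RRF*, a uniform sample is issued only on iterations where the MAB scheduler on line~\ref{alg:rrf:picktree} returns a rooted tree $T_i \in \{T_\text{init}, T_\text{target}\}$ (line~\ref{alg:rrf:sampling-uniform}); otherwise the local MCMC draw on line~\ref{alg:rrf:local-sampling} is used instead. Consequently $n^{\RRF*}_t \le t$, so the ratio $n^{\RRF*}_t / n^{\RRT*}_t$ lies in $[0,1]$ at every $t$, and the choice $\phi = 1$ already validates the inequality trivially.

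Tightening this to equality in the limit---which is what the subsequent completeness and optimality arguments implicitly rely on---is the real content. I would let $m_t$ denote the number of iterations up to time $t$ on which local MCMC sampling was invoked, so that $n^{\RRF*}_t = t - m_t$. \Cref{thm:termination-new-local-trees} caps the cumulative number of ever-created local trees at a constant $\varphi$, and each local tree contributes only finitely many local samples before it either (i)~has its arm pruned once $\mathbb{P}(a_i) < \eta$ by the discard test at the end of \Cref{alg:rrf}, or (ii)~is absorbed into a rooted tree via a successful join (line~\ref{alg:rrf:join-nearby-tree}), after which the reasoning of \Cref{lemma:inf-uniform-sampling} applies on that branch. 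Summing the per-tree contributions over the at most $\varphi$ trees shows that $M := \lim_{t\to\infty} m_t$ is almost surely finite.

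Therefore $n^{\RRF*}_t / n^{\RRT*}_t = 1 - m_t/t \to 1$ almost surely, and because the ratio is dominated pointwise by $1$, dominated convergence yields $\lim_{t\to\infty} \mathbb{E}[n^{\RRF*}_t / n^{\RRT*}_t] = 1$, so any $\phi \ge 1$, and in particular $\phi = 1$, satisfies the claim.

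The main obstacle is upgrading each per-tree termination condition to almost-sure finiteness in time. For pruning, one must argue that the Bayesian update of \Cref{eq:recursive-def-sample-dist} drives $\mathbb{P}(a_i)$ below $\eta$ after a finite run of failures, which follows from $\eta > 0$ together with the strict decrease in the arm's payoff estimate caused by each consecutive failed MCMC attempt. For joining, \Cref{thm:jointrees} only states the event in the limit, so a Borel--Cantelli style argument on top of the probability bound $1-\gamma$ in \Cref{thm:ktree-join-bound} (which can be made arbitrarily close to $1$ by enlarging the sample budget) is needed to upgrade the limit statement to almost-sure finiteness of the join time.
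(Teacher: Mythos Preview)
Your argument follows the same skeleton as the paper's: decompose \RRF*'s sampling into the uniform branch and the local MCMC branch, invoke \Cref{thm:termination-new-local-trees} to cap the number of local trees, argue the total local-sampling count is therefore bounded, and invoke \Cref{lemma:inf-uniform-sampling} to conclude the uniform count dominates so the ratio is controlled by a constant. The paper's proof is essentially this, stated more tersely. Two differences are worth noting. First, your opening observation that $n^{\RRF*}_t \le t = n^{\RRT*}_t$ already yields $\phi=1$ immediately is a shortcut the paper does not take; the paper goes straight to the asymptotic-convergence argument without isolating this trivial bound. Second, you are more careful than the paper on the step from ``finitely many local trees'' to ``finitely many local samples in total'': the paper simply asserts that a bounded number of trees implies a bounded $n_\text{local-samp}$, whereas you correctly flag that this requires each tree's arm to be retired in finite time (via the $\eta$-pruning rule or a merge) and sketch how to justify it. So your route is the paper's route, but with an extra easy entry point and with a gap in the paper's reasoning made explicit and patched.
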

\begin{proof}
There exist two different sampling schemes being employed in \RRF*.
The 
(i) \emph{uniform random sampling} in  \cref{alg:rrf} \cref{alg:rrf:sampling-uniform}, and
(ii) \emph{exploitative local sampling} in \cref{alg:rrf:local-sampling}.
The total number of exploitative local sampling $n_\text{local-samp}$ by MCMC in (ii) is the summation of each local sampler's sampled configurations $n_\text{local-samp}=\sum_{i=1}^{|\mathcal{T}|} n_{i,t}$.
Deriving from \Cref{thm:termination-new-local-trees}, since the number of local trees in the lifetime of \RRF* is bounded by a constant, the number of local sampling $n_\text{local-samp}$ will also be bounded by a constant.
On the other hand, \Cref{lemma:inf-uniform-sampling} states that the number of uniformly sampled configurations in (i) is unbounded and approaches infinity.
Therefore, the ratio of the total number of uniformly random configurations between \RRF* and \RRT* will be bounded by at most a constant as the behaviour of \RRF* will converge to \RRT* as $t\to\infty$.
\end{proof}

With \Cref{lemma:inf-uniform-sampling,thm:asym-same-as-rrt}, the \emph{probabilistic completeness} of \RRF* is immediate.
\begin{theorem}[Probabilistic Completeness] \label{thm:prob-completeness}
\RRF* inherits the same probabilistic completeness of \RRT*,
i.e., $\lim_{i\to\infty}\mathbb{P}(\{\sigma(i)\cap q_\text{target}\ne\emptyset\})=1$.
\end{theorem}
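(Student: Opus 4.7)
The plan is to reduce the probabilistic completeness of \RRF* directly to that of \RRT* by composing the three preceding results. The key observation is that \RRF* runs \RRT*-style uniform sampling at its rooted trees alongside the exploitative local sampling performed at local trees, so it suffices to argue that the uniform-sampling component in isolation is dense enough in $C_\text{free}$ to force $\mathbb{P}(\{\sigma(i)\cap q_\text{target}\ne\emptyset\})\to 1$.

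First, I would invoke \Cref{thm:termination-new-local-trees} to fix a finite iteration $t^*$ after which no new local trees are ever spawned, and appeal to \Cref{thm:jointrees} to conclude that every connectable local tree is eventually merged into the roadmap containing $T_\text{init}$. Second, I would use \Cref{lemma:inf-uniform-sampling} to assert that the number of uniform samples produced by \RRF* grows without bound, and \Cref{thm:asym-same-as-rrt} to guarantee that, in expectation, $n^{\RRF*}_t$ is within a constant factor $\phi$ of $n^{\RRT*}_t$. Finally, I would observe that the uniform-sampling branch in \cref{alg:rrf:sampling-uniform} of \Cref{alg:rrf}, together with the rewire step in \cref{alg:rrf:rewire}, constitutes a genuine \RRT*-style extension of $T_\text{init}$, so the classical \RRT* completeness result of~\autocite{karaman2011_SampAlgo} applies verbatim to this subsequence of iterations, yielding $\lim_{i\to\infty}\mathbb{P}(\{\sigma(i)\cap q_\text{target}\ne\emptyset\})=1$.

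The main obstacle will be ensuring that the MAB scheduler does not starve the rooted trees indefinitely: the reduction needs a positive asymptotic lower bound on the fraction of iterations devoted to uniform sampling at $T_\text{init}$. This is exactly what \Cref{thm:termination-new-local-trees} combined with \Cref{thm:asym-same-as-rrt} delivers, since the bounded total number of local trees can only absorb a bounded share of the sampling budget, so the proportion of iterations spent in \cref{alg:rrf:sampling-uniform} is asymptotically bounded below by $1/\phi$. A small sanity check that \FnSampleFree draws configurations identically distributed to those in \RRT* (i.i.d.\ uniform over $C_\text{free}$) then completes the reduction, and the theorem follows immediately from \Cref{lemma:inf-uniform-sampling,thm:asym-same-as-rrt} as flagged in the excerpt.
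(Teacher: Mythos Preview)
Your proposal is correct and follows essentially the same route as the paper, which simply declares the result immediate from \Cref{lemma:inf-uniform-sampling,thm:asym-same-as-rrt} without further elaboration. Your additional invocations of \Cref{thm:termination-new-local-trees} and \Cref{thm:jointrees}, together with the explicit check that the MAB scheduler cannot starve the rooted trees, flesh out exactly the justification the paper leaves implicit.
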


\subsubsection{Asymptotic optimality planning}

\begin{theorem}[Asymptotic optimality]
Let $\sigma^{*}_t$ be \RRF*'s solution at time $t$, and $c^*$ be the minimal cost for \cref{problem:optimality}.
If a solution exists, then the cost of $\sigma^{*}_t$ will converge to the optimal cost almost-surely.
That is,
    $
    \mathbb{P} \left( \left\{ \lim_{t\to\infty} c(\sigma^{*}_t) = c^* \right\} \right) = 1.
    $
\end{theorem}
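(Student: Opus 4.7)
The plan is to reduce the asymptotic behaviour of \RRF* to that of \RRT* and then invoke the known asymptotic optimality of \RRT* established in~\autocite{karaman2011_SampAlgo}. The chain of earlier results already laid out in the paper does most of the work, so the proof is essentially a synthesis argument. Concretely, I would proceed as follows.

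First I would observe that, by \Cref{thm:termination-new-local-trees}, there is a (random but) almost-surely finite time $T_0$ after which no new local tree is ever proposed, because the accumulative count $n_\text{local-tree}$ is bounded by a constant $\varphi$. By \Cref{thm:jointrees}, every local tree that is connectable to $T_\text{init}$ is almost surely merged into $T_\text{init}$ in finite time; the remaining local trees lie in disconnected components of $C_\text{free}$ and are irrelevant to the planning problem between $q_\text{init}$ and $q_\text{target}$ (which lie in the same component by assumption that a solution exists). So there exists an almost-surely finite stopping time $T_1 \ge T_0$ after which the only arms active in the MAB scheduler correspond to $T_\text{init}$ and $T_\text{target}$, and in particular every successful sample after $T_1$ is produced by \cref{alg:rrf:sampling-uniform}--\cref{alg:rrf:T-is-rooted:end} followed by the \texttt{Rewire} call on~\cref{alg:rrf:rewire}.

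Second, I would argue that this tail behaviour is exactly the behaviour of \textsc{bi-rrt}$^*$ growing $T_\text{init}$ and $T_\text{target}$ with the standard \texttt{SampleFree}, $\epsilon$-steering, and \texttt{Rewire} rules; the only extra operation is the $\epsilon$-join of \cref{alg:rrf:join-nearby-tree}, which can only add edges to $T_\text{init}$ and therefore can only decrease the cost-to-come of existing nodes, so it preserves (in fact, improves on) the RRT* invariants used in~\autocite{karaman2011_SampAlgo}. Combined with \Cref{lemma:inf-uniform-sampling} and \Cref{thm:asym-same-as-rrt}, the number of uniform samples drawn after $T_1$ is unbounded and asymptotically comparable to that of \RRT* run for the same number of iterations, so all the sampling-density and $\epsilon$-ball covering lemmas used in Karaman--Frazzoli apply verbatim. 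Invoking their almost-sure optimality result for \RRT* then yields
\begin{equation*}
\mathbb{P}\Bigl(\Bigl\{\lim_{t\to\infty} c(\sigma^{*}_t) = c^*\Bigr\}\Bigr) = 1,
\end{equation*}
noting that the finite head of the run (the samples produced before $T_1$, whether by uniform sampling or by Bayesian local sampling) can only provide additional feasible edges in the final roadmap and so cannot worsen the limiting cost.

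The main obstacle I anticipate is being precise about the fact that exploitative local samples produced before $T_1$ do not break the RRT* convergence argument. The argument above handles this qualitatively by saying ``extra edges can only help'', but a careful write-up needs to verify that: (i) local samples are always drawn from $C_\text{free}$ and steered by the same $\epsilon$ step bound, so every edge they contribute is admissible in the Karaman--Frazzoli analysis; (ii) the \texttt{Rewire} on~\cref{alg:rrf:rewire} is triggered whenever a local-tree merge reaches $T_\text{init}$, so the RRT* $k$-nearest (or $r_n$-ball) rewiring invariant is maintained for the portion of the graph rooted at $q_\text{init}$; and (iii) the tail process of uniform sampling after $T_1$ is truly i.i.d.\ uniform on $C_\text{free}$, independent of the MAB history, which follows from \cref{alg:rrf:sampling-uniform} using \FnSampleFree unconditionally. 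Once these three technical points are verified, the almost-sure convergence to $c^*$ reduces directly to the published \RRT* optimality theorem.
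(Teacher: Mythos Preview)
Your proposal is correct and follows essentially the same route as the paper: invoke \Cref{thm:jointrees} to collapse all local trees into one, invoke \Cref{lemma:inf-uniform-sampling} to guarantee infinitely many uniform samples on that tree, and then defer to the Karaman--Frazzoli \texttt{Rewire} argument for almost-sure convergence to $c^*$. Your write-up is considerably more careful than the paper's three-line sketch---in particular, your explicit introduction of the stopping times $T_0,T_1$ via \Cref{thm:termination-new-local-trees}, your use of \Cref{thm:asym-same-as-rrt}, and the three technical checks (i)--(iii) about local samples and rewiring are all sound refinements that the paper simply omits.
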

\begin{proof}
From \Cref{thm:jointrees}, all local trees in the same $C_\text{free}'$ will converge to a single tree.
According to \Cref{lemma:inf-uniform-sampling} there will be infinite sampling available to improve that tree.
Together with adequate rewiring procedure at \cref{alg:rrf} \cref{alg:rrf:rewire} (see~\autocite{karaman2010_IncrSamp}),
it is guaranteed that the solution will converge to the optimal solution as $t\to\infty$.
\end{proof}

\section{Experimental Results}\label{sec:experimental-results}

\begin{figure*}[tb]
    \centering
    \begin{subfigure}{.5\linewidth}
        \includegraphics[width=.99\linewidth]{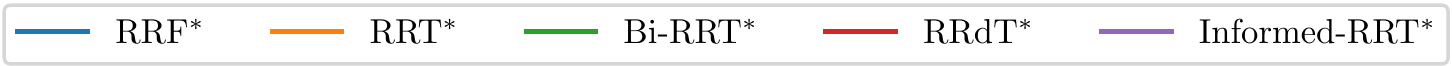}
    \end{subfigure}
    
    \begin{subfigure}{.25\linewidth}
        \includegraphics[width=\linewidth]{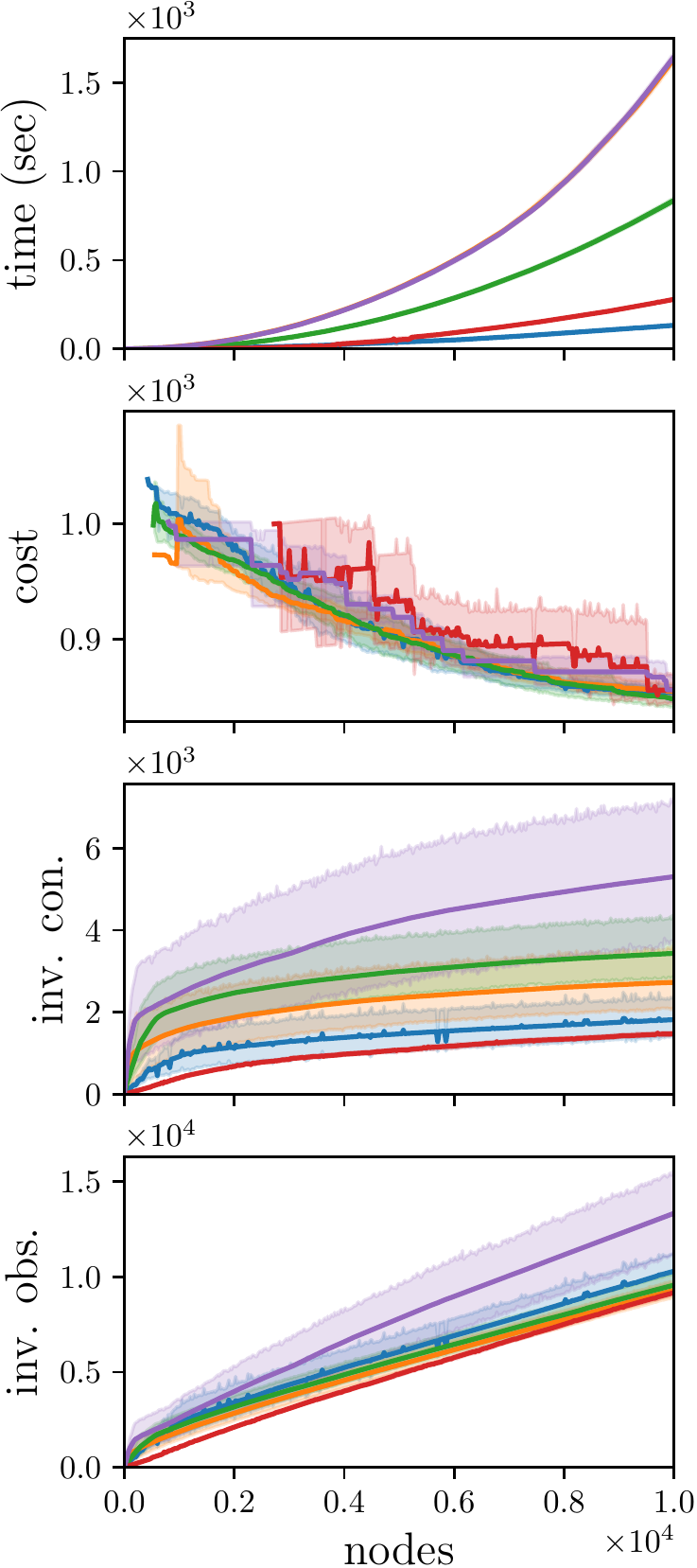}
        \caption{Intel lab \label{fig:stats:intel}}
    \end{subfigure}%
    \begin{subfigure}{.23\linewidth}
        \includegraphics[width=\linewidth]{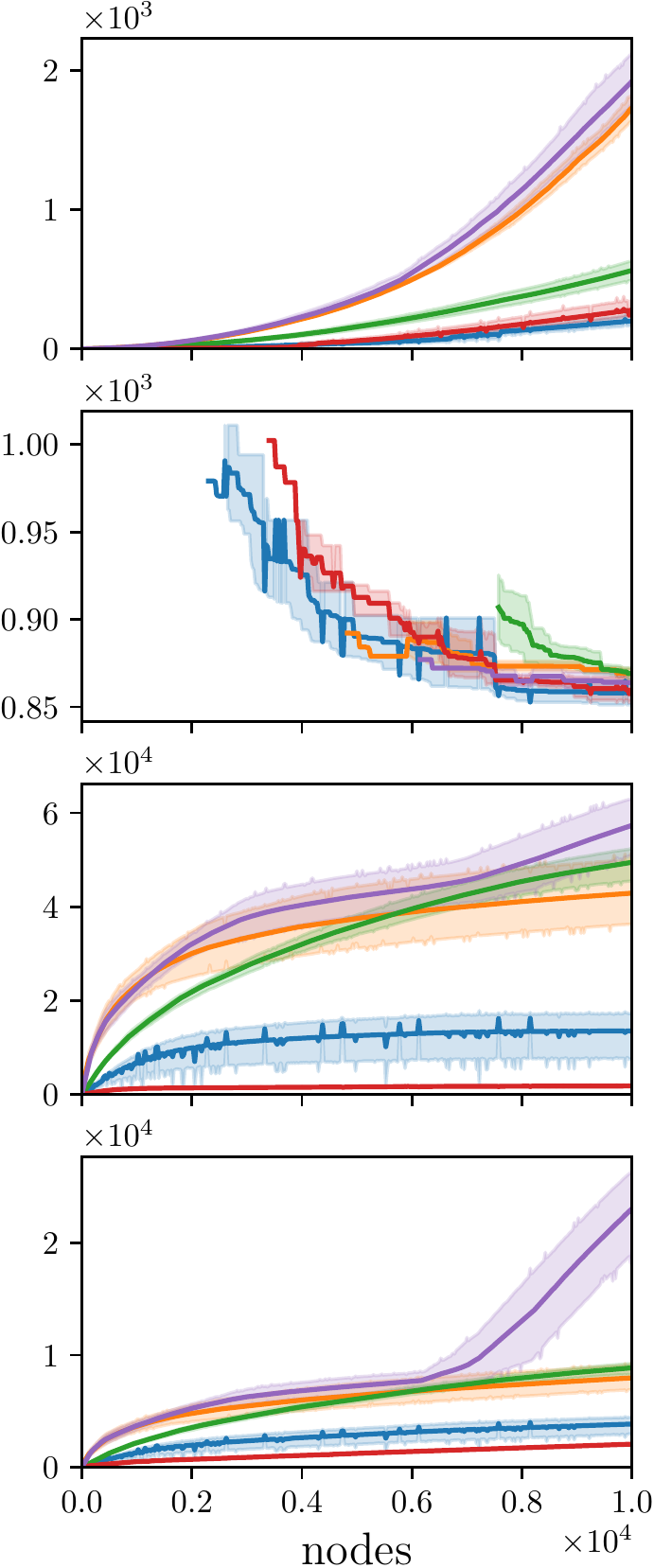}
        \caption{Maze \label{fig:stats:maze}}
    \end{subfigure}%
    \begin{subfigure}{.23\linewidth}
        \includegraphics[width=\linewidth]{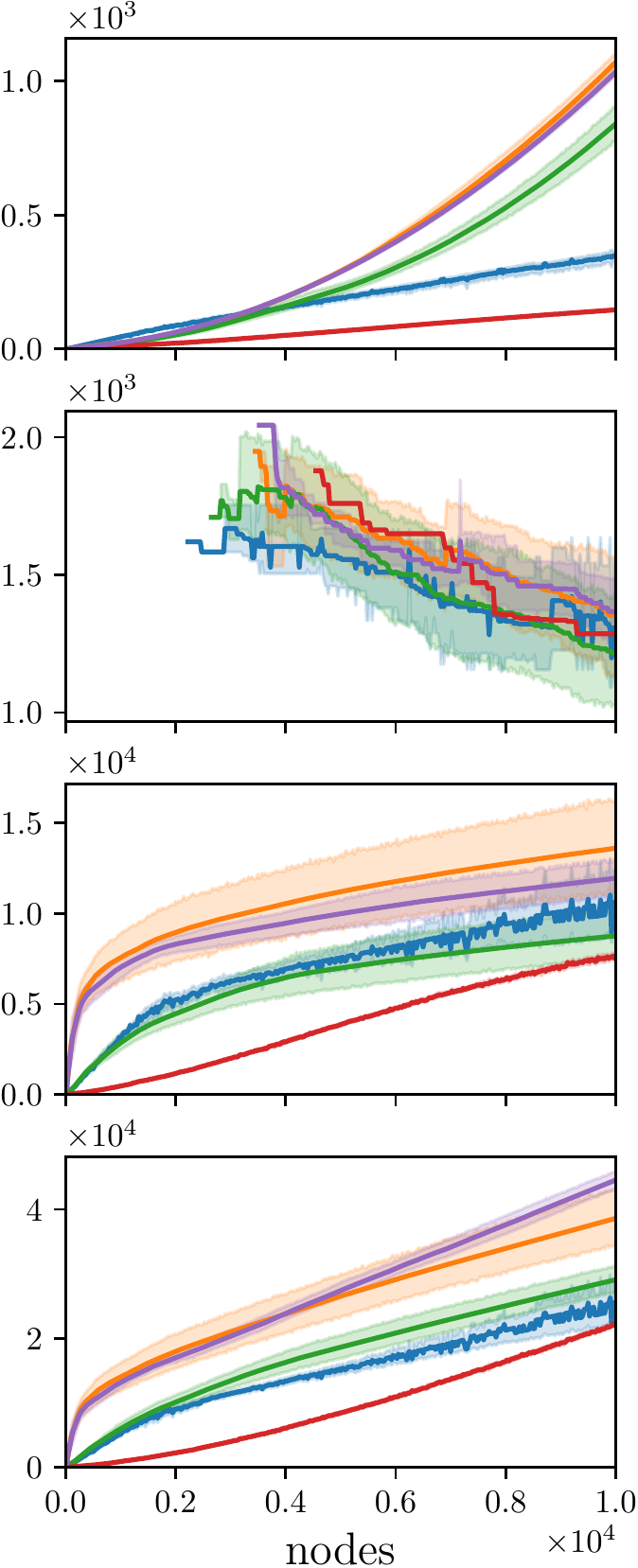}
        \caption{Rover arm \label{fig:stats:4d}}
    \end{subfigure}%
    \begin{subfigure}{.23\linewidth}
        \includegraphics[width=\linewidth]{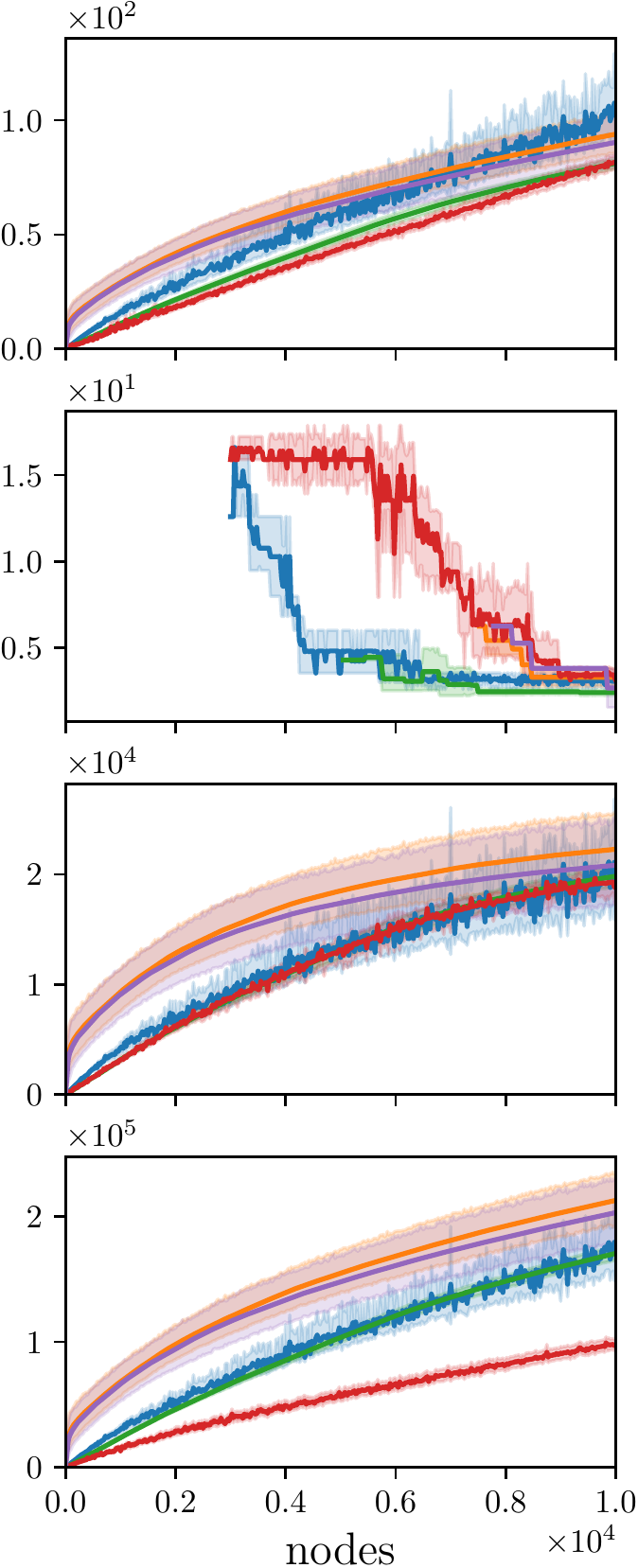}
        \caption{Manipulator \label{fig:stats:6d}}
    \end{subfigure}
    \caption{
        Top to bottom are \emph{time (seconds)}, \emph{cost}, \emph{invalid connections} and \emph{invalid obstacles} against number of nodes. 
        Each experiment is repeated 10 times, with each solid line and shaded region representing the mean and $95\%$ confidence interval.
        \emph{Invalid obstacles} and \emph{invalid connections} refers to the number of invalid samples due to $q_\text{rand}\in C_\text{obs}$ or intermediate collision, respectively.
        For the \emph{cost} metric, the line appears at the first instance that a solution is found.
        \label{fig:stats}
    }
    \vspace{-1em}
\end{figure*}

\begin{figure}[tb]
    \centering
    \begin{subfigure}{0.7425\linewidth}
        \includegraphics[width=.33\linewidth,frame]{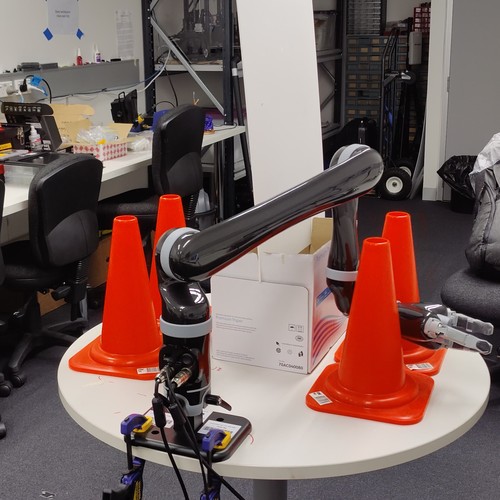}%
        \includegraphics[width=.33\linewidth,frame]{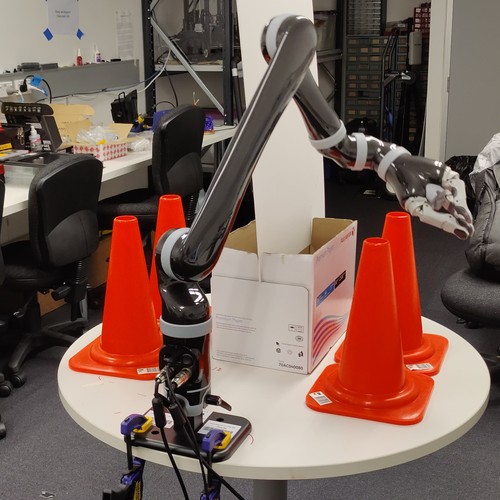}%
        \includegraphics[width=.33\linewidth,frame]{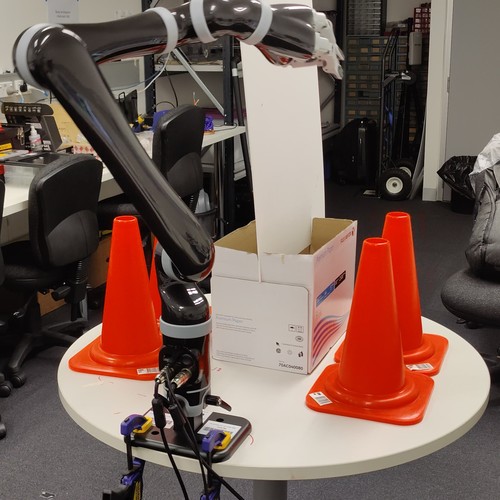}
        \includegraphics[width=.33\linewidth,frame]{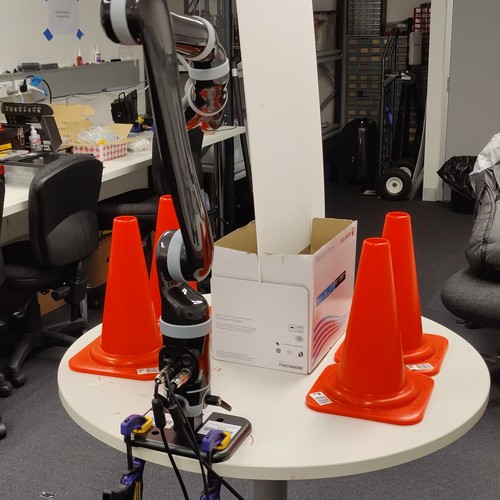}%
        \includegraphics[width=.33\linewidth,frame]{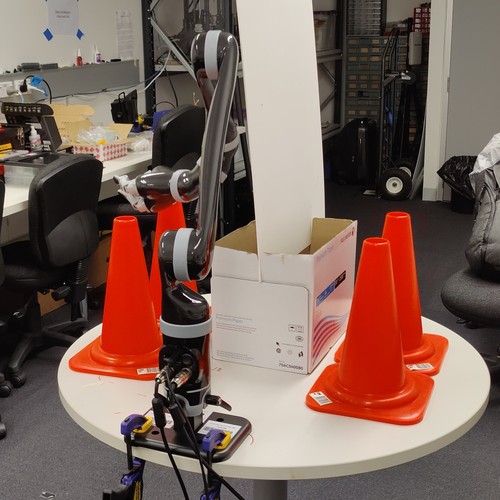}%
        \includegraphics[width=.33\linewidth,frame]{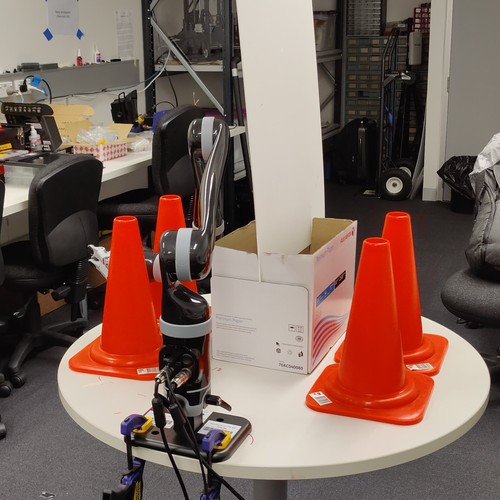}
        \caption{Sequence of trajectory executed by JACO arm  \label{fig:jaco:real-robot}}
    \end{subfigure}%
    \hfill%
    \begin{subfigure}{0.2475\linewidth}
        \includegraphics[width=.99\linewidth,frame]{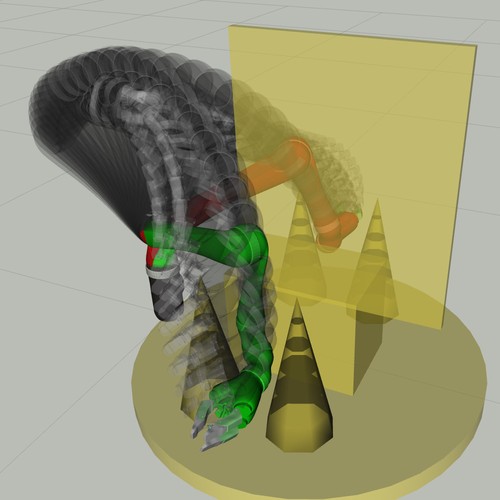}
        \includegraphics[width=.99\linewidth,frame]{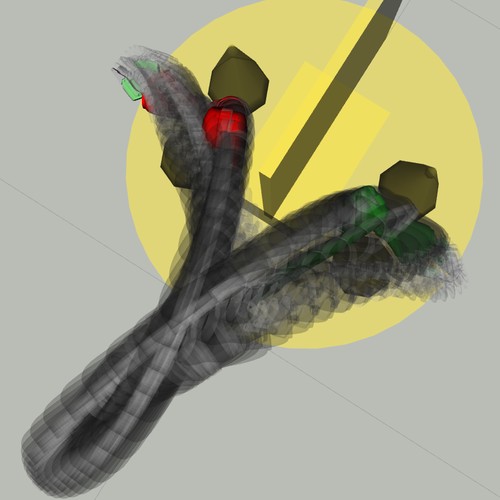}
        \caption{Trajectory trail \label{fig:jaco:sim}}
    \end{subfigure}
    \caption{Evaluation of \RRF*'s algorithmic robustness on a JACO robotic arm.
    (\subref{fig:jaco:real-robot}) Sequence of trajectory execution in a cluttered environment.
    (\subref{fig:jaco:sim}) Solution trajectory trail after planning with a time budget of five seconds.    
    \label{fig:jaco}
    }
\end{figure}

We evaluate the performance of \RRF* in various environments with a range of degree of complexity, as shown in~\cref{fig:map}.
\emph{Intel lab} and \emph{maze} are with a point mass robot in $\mathbb{R}^2$ to plan a trajectory from some initial configuration to a target configuration.
\emph{Rover's arm} also has a two-dimensional spatial location, but with two additional rotational joints that denote the rotational angles of the robot arm.
The rover arm needs to navigate while rotating its arm to avoid collisions.
\emph{Manipulator} environment is a 6 dof TX90 robotic manipulator that operates in a 3D cluttered environment;
with a task to move the manipulator between two cupboards while holding a cup.
The environment contains many structures that restrict the manipulator's movement.
Qualitatively, we evaluate \RRF*'s algorithmic robustness by testing the resulting trajectory in a real-world JACO arm (\cref{fig:jaco}).
In terms of environment complexity, \emph{maze} (\cref{fig:map:maze}) and \emph{manipulator} (\cref{fig:map:6d}) are highly cluttered due to the limited visibility and abundance of narrow passages; whereas the other two environments contain more free space with multiple homotopy classes of solution trajectories.

\RRF* is evaluated against
\RRT* \autocite{karaman2010_IncrSamp},
\BiRRT* \autocite{kuffner2000_RRTcEffi},
\RRdT* \autocite{lai2018_BalaGlob}, and
Informed \RRT* \autocite{gammell2018_InfoSamp}, implemented under the same planning framework
in Python.
Overall, \RRF* performs consistently across all environments regardless of its complexity.
\RRF*'s performance is similar to \RRdT* because both methods are using multiple local trees to explore \Cspace.
However, \RRF* adaptively creates new local trees only in regions deemed bottlenecks and coupled with a MAB selection that balances choosing rooted trees or local trees that are promising.
Therefore, similar to \BiRRT* it inherits the benefit of obtaining fast solution in open areas (e.g. see \emph{cost} metric in \cref{fig:stats:intel,fig:stats:4d}), and similar to \RRdT* it obtains fast solution in cluttered areas (see \emph{cost} in \cref{fig:stats:maze,fig:stats:6d}).
The runtime savings of \RRF* comes from the reduced number of collision checks by reducing invalid samples.
Since both \RRF* and \RRdT* uses multiple trees to explore \Cspace, their metric of invalid connections and invalid obstacles (bottom two rows of~\cref{fig:stats}) are often the lowest among all others.
However, unlike \RRdT*, \RRF* is adaptive to the current workspace with a heuristic to only deploy local trees in highly constrained (\cref{sec:alg:implementation}).
Therefore, the performance of \RRF* can be seen as a hybrid between traditional motion planner and multi-trees planner;
the same reason allows \RRF* to quickly converges its solution when compared to \RRdT* (\emph{cost} in~\cref{fig:stats:6d}).

\section{Conclusion}
We present \RRF*, a generalised adaptive multi-tree approach for asymptotic optimal motion planning.
\RRF* uses multiple local trees to adaptively utilise a beneficial local sampling strategy when constrained regions are discovered.
Bayesian local sampling in a cluttered area allows \RRF* to exploit local structures and effectively tackle narrow passages.
\RRF* adaptively deploys local sampling in restricted regions when necessary, which overcome the issue of multi-trees approach having slow initial time in open spaces.

\clearpage

\printbibliography

\end{document}